\newtheorem{theorem}{Theorem}
\begin{document}
\title[Asymmetric Design of Control Barrier Function for Multiagent Autonomous Robotic Systems]{Asymmetric Design of Control Barrier Function for Multiagent Autonomous Robotic Systems}


\author*[1]{\fnm{Hiroki} \sur{Etchu}}\email{etchu.h.aa@m.titech.ac.jp}

\author[2]{\fnm{Yuki} \sur{Origane}}\email{origane.y.aa@m.titech.ac.jp}
\equalcont{These authors contributed equally to this work.}

\author[3]{\fnm{Daisuke} \sur{Kurabayashi}}\email{dkura@irs.ctrl.titech.ac.jp}
\equalcont{These authors contributed equally to this work.}

\affil*[1]{\orgdiv{Systems and control engineering}, \orgname{Tokyo Institute of Technology}, \orgaddress{\street{Ookayama 2-12-1}, \city{Meguro}, \postcode{152-8550}, \state{Tokyo}, \country{Japan}}}

\affil[2]{\orgdiv{Systems and control engineering}, \orgname{Tokyo Institute of Technology}, \orgaddress{\street{Ookayama 2-12-1}, \city{Meguro}, \postcode{152-8550}, \state{Tokyo}, \country{Japan}}}

\affil[3]{\orgdiv{Systems and control engineering}, \orgname{Tokyo Institute of Technology}, \orgaddress{\street{Ookayama 2-12-1}, \city{Meguro}, \postcode{152-8550}, \state{Tokyo}, \country{Japan}}}


\abstract{
In this paper, we propose a method to avoid “no-solution” situations of the control barrier function (CBF) for distributed collision avoidance in a multiagent autonomous robotic system (MARS). MARS, which is composed of distributed autonomous mobile robots, is expected to effectively perform cooperative tasks such as searching in a certain area. Therefore, collision avoidance must be considered when implementing MARS in the real world. The CBF is effective for solving collision-avoidance problems. However, in extreme conditions where many robots congregate at one location, the CBF constraints that ensure a safe distance between robots may be violated. We theoretically demonstrate that this problem can occur in certain situations, and introduce an asymmetric design for the inequality constraints of CBF.  We asymmetrically decentralized inequality constraints with weight functions using the absolute speed of the robot so that other robots can take over the constraints of the robot in severe condition. We demonstrate the effectiveness of the proposed method in a two-dimensional situation wherein multiple robots congregate at one location. We implement the proposed method on real robots and the confirmed the effectiveness of this theory.
}

\keywords{Swarm Robot, Collision Avoidance, Control Barrier Function, Multi-agent System}



\maketitle
\section{Introduction}\label{sec1}
A significant amount of research has been conducted on the multiagent autonomous robotic system (MARS) \cite{c1}-\cite{c11}, which is superior to individual robots in terms of the system flexibility, scalability, and fault tolerance. MARS, which is composed of autonomous mobile robots, is expected to effectively perform cooperative tasks such as searching in a certain area. In such tasks, situations may arise wherein multiple robots congregate at one location. Because a real robot has a physical body with a nonzero volume, collision avoidance must be considered in these scenarios. The control barrier function (CBF) \cite{c12} is effective for solving collision-avoidance problems with a low computational cost. Ames \textit{et al}. \cite{c13} applied the CBF to mobile and legged robots. CBF was also applied to multiagent robot systems in some research articles \cite{c14}-\cite{c19}. Wang \textit{et al}. \cite{c14} proposed and demonstrated a realistic solution for a multiple mobile robot system capable of escaping deadlocked situations. However, in extreme conditions, where many robots congregate at one location, the CBF constraints that ensure a safe distance between robots may be violated. This may lead to “no-solution” situations. In this study, we focus on avoiding the violations of the safety functions of CBF, because the constraints must be satisfied in all situations to guarantee safety from collisions.
 
We found that this problem of no-solution was caused by the decentralization of the CBF constraints to extend CBF to MARS. Most of the previous studies extending CBF to MARS employed a symmetric distribution of the inequality constraints among the robots. However, because of this symmetric distribution, robots often failed to find a common solution that satisfied all the distributed constraints. Therefore, the constraints must be decentralized, so that the robots will always have a common solution.

In this paper, we propose an asymmetric weight design of the constraints of CBF, to guarantee a common solution for a swarm of robots. First, we mathematically examine how a symmetric distribution violates the constraints when multiple robots congregate at one location. We then propose an asymmetric weight design for the decentralization of CBF. Through two-dimensional (2D) simulations where robots congregate at one location, we demonstrate the effectiveness of the proposed method. We confirm the effectiveness of a real environment through experiment which multiple robots congregate at one location. 

The remainder of this manuscript is organized as follows. The problem is formulated in Section 2. Section 3 shows how the constraints are violated, using a simple mathematical model. Section 4 introduces the asymmetric weight design for the distribution of constraints. In Section 5, we demonstrate the effectiveness of the proposed method through simulations. Section 6 demonstrates the effectiveness of the proposed method through an experiment with real robots. Section 7 concludes the paper and points out future research directions.
\section{Problem Statements}\label{sec2}
\subsection{Autonomous Robots}\label{subsec1}
We describe the essential conditions in this research. We consider a situation with three or more robots. Each robot satisfies (1) for a second-order differential system:
\begin{equation}
\begin{split}
m\ddot{\bm{x}}_i &=\bm{u}_i\\
\bm{u}_i &=k(\bm{x}_{\mathrm{goal}}-\bm{x}_i)-c\dot{\bm{x}}_i
\end{split}
\end{equation}
Where, $m$ is the mass of the robot, $k$ is the position feedback coefficient, and $c$ is the damper coefficient.
$\bm{x}_i$  is the vector of the robot$_i$'s position and $\bm{u}_i$ is the input to robot$_i$. Each robot can determine its relative position, relative velocity, and absolute velocity in relation to the other robots. We assume that a robot in MARS can obtain this information because several studies have used this for collision avoidance in automated driving \cite{c20}. However, they cannot identify others or know their inputs.
\subsection{Control Barrier Function (CBF)}
We employ the zeroing control barrier function (ZCBF) \cite{c13} in the framework for collision avoidance. Let state $\bm{q}\in \mathbb{R}^n$ follow (2) for the following input-affine system  with input $\bm{u}\in \mathbb{R}^m$.
\begin{equation}
\dot{\bm{q}} = f(\bm{q})+g(\bm{q})\bm{u}
\end{equation}
Where, $f$ and $g$ are local Lipschitz functions. $h(\bm{q}): \mathbb{R}^n  \rightarrow \mathbb{R}$ is a continuously differentiable function and $C = \{\bm{q}\in \mathbb{R}^n | h(\bm{q}) \geq 0\}$. Then, $h(\bm{q})$ is the ZCBF for set $C$.
Let us assume that, $a,b\geq0$. If the continuous function
$\alpha:(-b,a) \rightarrow (-\infty,\infty)$ is strictly monotonically increasing and $\alpha(0)=0$, $\alpha$ will be an extended class $K$ function. $C$ is a forward invariant set if there exists an extended class $K$ function $\alpha: \mathbb{R} \rightarrow \mathbb{R}$ that satisfies (3).
\begin{equation}
\underset {\bm{u}\in \mathbb{R}^m}{\mathrm{sup}}{[\dot{h}(\bm{q})+\alpha(h(\bm{q}))] \geq 0}
\end{equation}
Subsequently, if the function $h(\bm{q})$ is a ZCBF of $C$ and the Lipschitz continuous input $\bm{u}$ always satisfies (4), then $C$ will be a forward invariant set.
\begin{equation}
L_f h(\bm{q}) + L_g h(\bm{q})\bm{u} + \alpha(h(\bm{q})) \geq 0
\end{equation}
Where, $L_f$ and $L_g$ are the Lie derivatives along $f$ and $g$ respectively. We must select the control input $\bm{u}$ that satisfies (4) to guarantee $\bm{q}(0)\in C \rightarrow \bm{q}(t) \in C \ \forall t\geq 0$.
We must discuss how to find an input that satisfies the inequality constraints because an input for a robot in MARS along with its target behavior may not always satisfy the inequality constraints. Let $\bm{\hat{u}}$ be the control input that does not consider collision avoidance. We modify the control input to achieve the control target satisfying (4) by successively solving the minimization problem for $\bm{u}$, which minimizes the difference between $\bm{u}$ and $\bm{\hat{u}}$.
The minimization problem can be formulated as a quadratic program (QP), as shown in (5).
\begin{equation}
\begin{gathered}
\underset {\bm{u}\in \mathbb{R}^m}{\mathrm{argmin}}{\frac{1}{2}||\bm{\hat{u}}-\bm{u}||^2}\\
L_f h(\bm{q}) + L_g h(\bm{q})\bm{u} + \alpha(h(\bm{q})) \geq 0
\end{gathered}
\end{equation}
An example of a solution to (5) is (6), where the equality in (5) holds true.
\begin{equation}
\bm{u}=-\frac{L_f h(\bm{q})+\alpha(h(\bm{q}))}{||L_g h(\bm{q})||^2}L_g h(\bm{q})^\top
\end{equation}
Note that if the functions $L_f h(\bm{q})$,$L_g h(\bm{q})$,and $\alpha(h(\bm{q}))$ are Lipschitz continuous, the solution $\bm{u}$ of the QP will also be Lipschitz continuous.
\subsection{ZCBF for Distributed  Robots}
Herein, we introduce the conventional method of distributed ZCBF design for swarm robots \cite{c14,c19}. Let $\bm{q}_{ij}=[\bm{x}_{ij}^{\top},\bm{\dot{x}}_{ij}^{\top} ]^\top=[x_{ij},y_{ij},\dot{x}_{ij},\dot{y}_{ij} ]^\top=[(\bm{x}_j-\bm{x}_i)^\top,(\bm{\dot{x}}_j-\bm{\dot{x}}_i)^\top]^\top$, where $\bm{x}_i$ is the position of robot$_i$.
The ZCBF $h_0 (\bm{q}_{ij})$ between robots $i$ and $j$ can be expressed by (7).
\begin{equation}
\begin{gathered}
C_0= \{\bm{q}_{ij} \in \mathbb{R}^n | h_0 (\bm{q}_{ij} )\geq 0\} \\
h_0 (\bm{q}_{ij} )=||\bm{x}_{ij}||-r_s
\end{gathered}
\end{equation}
Where, $r_s>0$ is the minimum safe distance that the system must observe according to the CBF. $r_s$ is a parameter that can be determined by the designer.
Because $h_0 (\bm{q}_{ij})$ has a relative degree of two, we cannot directly apply general CBF methods. According to ECBF \cite{c21}, we define a new safety set, $C$.
\begin{equation}
\begin{gathered}
C= \{\bm{q}_{ij} \in \mathbb{R}^n | h (\bm{q}_{ij} )\geq 0\}\\
h(\bm{q}_{ij})=\dot{h}_0 (\bm{q}_{ij}) T_c+h_0 (\bm{q}_{ij})=||\bm{x}_{ij}||+\frac{\dot{\bm{x}}_{ij}^{\top}\bm{x}_{ij}}{||\bm{x}_{ij}||}T_c-r_s
\end{gathered}
\end{equation}
Where  $T_c>0$ denotes a constant in the temporal domain. From equation(1) which represents the systems of robot$_i$ and robot$_j$, we obtain system of $\bm{q}_{ij}$ as (9).
\begin{equation}
\begin{gathered}
\dot{\bm{q}}_{ij}=f(\bm{q}_{ij})+g(\bm{q}_{ij})(\bm{u}_j-\bm{u}_i)\\
Let \ f(\bm{q}_{ij})=[\dot{\bm{x}}_{ij}^{\top},0,0]^\top,g(\bm{q}_{ij} )=\begin{bmatrix}0&0&\frac{1}{m}&0\\ 0&0&0&\frac{1}{m} \end{bmatrix}^\top.
\end{gathered}
\end{equation}
We can rewrite the inequality constraints in (4) as (10).
\begin{equation}
L_f h(\bm{q}_{ij}) + L_g h(\bm{q}_{ij})(\bm{u}_j-\bm{u}_i) + \gamma h(\bm{q}_{ij}) \geq 0
\end{equation}
Where $\gamma>0$ is a constant parameter that determines the collision avoidance behavior. If $\gamma$ is small, the robot gradually reduces its speed from a distance sufficiently farther than the safe distance. If it is large, the robot brakes hard.
Because a robot cannot know the inputs of other robots, it cannot directly solve the inequality constraint. Therefore, we must decentralize the inequalities for each robot. Let us consider the sign dependence of $h(\bm{q}_{ij}),L_g h(\bm{q}_{ij})$ and $L_f h(\bm{q}_{ij})$ on $\bm{q}_{ij}$, for decentralization.
\begin{equation}
\begin{split}
h(\bm{q}_{ij})&=||\bm{x}_{ij}||+\frac{\dot{\bm{x}}_{ij}^{\top}\bm{x}_{ij}}{||\bm{x}_{ij}||}T_c-r_s\\
&=||-\bm{x}_{ij}||+\frac{(-\dot{\bm{x}}_{ij}^{\top})(-\bm{x}_{ij})}{||-\bm{x}_{ij}||}T_c-r_s=h(-\bm{q}_{ij})
\end{split}
\end{equation}
\begin{equation}
\begin{split}
L_g h(\bm{q}_{ij})&=\frac{\partial h(\bm{q}_{ij})}{\partial\bm{q}_{ij}}g(\bm{q}_{ij})=\left[\frac{\partial h(\bm{q}_{ij})}{\partial\bm{x}_{ij}},\frac{\partial h(\bm{q}_{ij})}{\partial \dot{\bm{x}}_{ij}}\right]\begin{bmatrix}0,0,\frac{1}{m},0\\ 0,0,0,\frac{1}{m} \end{bmatrix}^\top\\
&=\frac{\bm{x}_{ij}^\top}{m||\bm{x}_{ij}||}T_c
=-\frac{(-\bm{x}_{ij})^\top}{m||-\bm{x}_{ij}||}T_c
=-L_g h(-\bm{q}_{ij})
\end{split}
\end{equation}
\begin{equation}
\begin{gathered}
L_f h(\bm{q}_{ij})=\frac{\partial h(\bm{q}_{ij})}{\partial\bm{q}_{ij}} f(\bm{q}_{ij})=\left[\frac{\partial h(\bm{q}_{ij})}{\partial\bm{x}_{ij}},\frac{\partial h(\bm{q}_{ij})}{\partial \dot{\bm{x}}_{ij}}\right][\dot{\bm{x}}_{ij}^{\top},0,0]^\top\\
=\frac{\dot{\bm{x}}_{ij}^\top \bm{x}_{ij}}{||\bm{x}_{ij}||}+\frac{(y_{ij}\dot{x}_{ij}-x_{ij}\dot{y}_{ij})^2}{||\bm{x}_{ij}||^3}T_c\\
=\frac{(-\dot{\bm{x}}_{ij}^\top) (-\bm{x}_{ij})}{||-\bm{x}_{ij}||}+\frac{((-y_{ij})(-\dot{x}_{ij})-(-x_{ij})(-\dot{y}_{ij}))^2}{||-\bm{x}_{ij}||^3}T_c
=L_f h(-\bm{q}_{ij})
\end{gathered}
\end{equation}
The dependence of the sign on $\bm{q}_{ij}$ is represented by (14), (15), and (16).
\begin{equation}
h(\bm{q}_{ij})=h(-\bm{q}_{ij})
\end{equation}
\begin{equation}
     L_g h(\bm{q}_{ij})=-L_g h(-\bm{q}_{ij})
\end{equation}
\begin{equation}
L_f h(\bm{q}_{ij})=L_f h(-\bm{q}_{ij})
\end{equation}
Using $\bm{q}_{ji}=-\bm{q}_{ij}$, we decentralize the inequality constraint in (10) as (17).
\begin{equation}
\begin{gathered}
L_f h(\bm{q}_{ij}) - 2L_g h(\bm{q}_{ij})\bm{u}_i + \gamma h(\bm{q}_{ij}) \geq 0\\
L_f h(\bm{q}_{ji}) - 2L_g h(\bm{q}_{ji})\bm{u}_j + \gamma h(\bm{q}_{ji}) \geq 0\\
\end{gathered}
\end{equation}
Finally, we determine the control input $\bm{u}_i$ of robot$_i$ in the QP form, as (18).
\begin{equation}
\begin{gathered}
\underset {\bm{u}_i\in \mathbb{R}^m}{\mathrm{argmin}}{\frac{1}{2}||\bm{\hat{u}}_i-\bm{u}_i||^2}\\
L_f h(\bm{q}_{ij}) - 2L_g h(\bm{q}_{ij})\bm{u}_i + \gamma h(\bm{q}_{ij}) \geq 0\\
\end{gathered}
\end{equation}
\subsection{Objective of this Study}
We have observed several situations wherein the movement of three or more autonomous robots violates the constraints of decentralized ZCBF.
Because these constraints represent the requirements for ensuring collision avoidance, we require a systematic design and method for obtaining all possible inputs, in order to satisfy them.
However, in situations  where robots congregate at one location, the conventional distributed ZCBF may lose its possible input. We theoretically investigate this problem, and propose a modification for the distributed ZCBF to guarantee collision avoidance. Through analyses and simulations, we demonstrate the effectiveness of the proposed method.
First, to simplify the problem, we begin our analysis from a one-dimensional (1D) situation, where the robots move only in the straight direction. Then, we confirm that the proposed method can find a solution while avoiding collisions by simulating a 2D situation wherein  robots congregate at one location. Finally, we confirm the effectiveness of a real environment through an experiment which multiple robots congregate at one location.
\section{Theoretical Analysis of the No-Solution Scenario}
\subsection{Analysis of the Existence of the Solution}
We now consider how a no-solution situation occurs. Let us consider the situation in Fig. 1, where three robots are at the left end, center, and right end. The three robots move only along the horizontal axis. Their initial positions and inputs are presented in (19) and (20), where $[x_i(t),\dot{x}_i(t) ]^\top$ represents the position and velocity of robot$_i$ at time $t$.
   \begin{figure}[ht]
      \centering
      \includegraphics[width=0.8\textwidth]{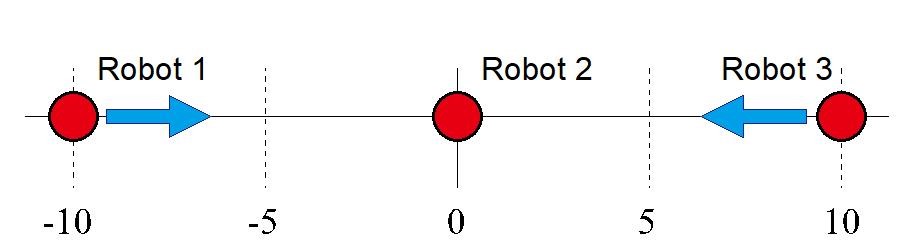}
      \caption{Example of a case of collision avoidance.}
   \end{figure}
\begin{equation}
\begin{split}
[x_1(0),\dot{x}_1(0) ]^\top&=[-10,5]^\top,\\
[x_2(0),\dot{x}_2(0) ]^\top&=[0,0]^\top,\\
[x_3(0),\dot{x}_3(0) ]^\top&=[10,-5]^\top
\end{split}
\end{equation}
\begin{equation}
\hat{u}_1=\hat{u}_2=\hat{u}_3=0
\end{equation}
This situation is similar to the case wherein $c=k=0$ in (1).
We assumed that the robot is a point mass, and that the safe distance $r_s=0.5$. At the initial positions, the distances between the robots are much greater than $r_s$. We provide constant velocities to robots 1 and 3, so that they constantly approach the position of robot 2.
First, we consider the constraint inequalities of robot 2, which are subject to the constraint inequalities between robots 1 and 3. Based on (17), the inequalities are (21).
\begin{equation}
\begin{gathered}
L_f h(\bm{q}_{21}) - 2L_g h(\bm{q}_{21})\bm{u}_2 + \gamma h(\bm{q}_{21}) \geq 0\\
L_f h(\bm{q}_{23}) - 2L_g h(\bm{q}_{23})\bm{u}_2 + \gamma h(\bm{q}_{23}) \geq 0\\
\end{gathered}
\end{equation}
Because robots 1 and 3 are symmetrically located, (22)-(25) hold true, based on the initial state and the safe distance design of the CBF.
\begin{equation}
x_{23}>0
\end{equation}
\begin{equation}
\bm{q}_{21}=-\bm{q}_{23}
\end{equation}
\begin{equation}
|x_{23}|\geq r_s
\end{equation}
\begin{equation}
L_g h(\bm{q}_{23})=\frac{\partial h}{\partial\bm{q}_{23}}g(\bm{q}_{23})=\frac{x_{23}}{m||x_{23}||}T_c=\frac{T_c}{m}>0
\end{equation}
According to (14), (15), and (16), we rewrite the constraint inequalities in (21) as (26).
\begin{equation}
\begin{gathered}
L_f h(\bm{q}_{23}) + 2L_g h(\bm{q}_{23})\bm{u}_2 + \gamma h(\bm{q}_{23}) \geq 0\\
L_f h(\bm{q}_{23}) - 2L_g h(\bm{q}_{23})\bm{u}_2 + \gamma h(\bm{q}_{23}) \geq 0\\
\end{gathered}
\end{equation}
Therefore, the range of input $u_2$ is (27).
\begin{equation}
\frac{L_fh(\bm{q}_{23})+\gamma h(\bm{q}_{23})}{2L_gh(\bm{q}_{23})}\geq u_2\geq-\frac{L_f h(\bm{q}_{23})+\gamma h(\bm{q}_{23})}{2L_gh(\bm{q}_{23})}
\end{equation}
As $L_g h(\bm{q}_{23})>0$, the condition for the existence of $u_2$  depends on inequality $(28)$, given below:
\begin{equation}
L_fh(\bm{q}_{23})+\gamma h(\bm{q}_{23})\geq0
\end{equation}
If $L_fh(\bm{q}_{23})+\gamma h(\bm{q}_{23})$ is less than zero, the existence of a solution is not guaranteed, which unfortunately, can occur. We prove this in Theorem 1.
\begin{theorem}[]\label{thm1}
If the robots follow the initial conditions in (19) and constraint inequalities in (21), the minimum value of $L_fh(\bm{q}_{23})+\gamma h(\bm{q}_{23})<0$
\end{theorem}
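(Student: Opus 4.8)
The plan is to make the one‑dimensional motion completely explicit and then simply read off the sign of $L_f h(\bm q_{23})+\gamma h(\bm q_{23})$. First I would specialise the closed forms of $h$, $L_f h$ and $L_g h$ given in (8), (12) and (13) to one dimension: since all motion is along the horizontal axis, the $y$‑components of each $\bm q_{ij}$ vanish, so whenever $x_{ij}>0$ one has $h(\bm q_{ij})=x_{ij}+\dot x_{ij}T_c-r_s$, $L_f h(\bm q_{ij})=\dot x_{ij}$ and $L_g h(\bm q_{ij})=T_c/m>0$. The point to keep in mind is that $L_f h(\bm q_{23})$ is nothing but the closing rate $\dot x_{23}=\dot x_3-\dot x_2$.

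Next I would argue that robot 2 never leaves the origin while its constraints remain feasible. The data (19) is invariant under the reflection $x\mapsto -x$ combined with the exchange of robots 1 and 3, and by (23) $\bm q_{21}=-\bm q_{23}$; hence, using (14)--(16), robot 2's two inequalities in (21) reduce to the symmetric pair (26), so $u_2$ is confined to the interval $[-\beta,\beta]$ with $\beta=\big(L_f h(\bm q_{23})+\gamma h(\bm q_{23})\big)/\big(2L_g h(\bm q_{23})\big)$. Since $\hat u_2=0$ by (20), as long as $\beta\ge 0$ the QP (18) returns $u_2=0$, so $\ddot x_2=0$ and robot 2 stays at $x_2\equiv 0$, $\dot x_2\equiv 0$. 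Meanwhile robots 1 and 3 keep the constant velocities prescribed by the scenario, so $x_3(t)=10-5t$ and $\dot x_3(t)=-5$ (mirror image for robot 1), at least up to the degenerate configuration at $t=2$.

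Then I would substitute these trajectories into the formulas of the first step. For $0\le t<2$ we have $x_{23}(t)=10-5t>0$ and $\dot x_{23}(t)=-5$, hence
$$L_f h(\bm q_{23})+\gamma h(\bm q_{23}) \;=\; -5 + \gamma\big(9.5-5T_c-5t\big),$$
an affine function of $t$ with slope $-5\gamma<0$. It is already negative at $t=0$ when $\gamma(9.5-5T_c)<5$; otherwise it strictly decreases and is negative for every $t>1.9-T_c-1/\gamma$. Since $1.9-T_c-1/\gamma<2$ for all $\gamma,T_c>0$, there is a $t\in[0,2)$ at which $L_f h(\bm q_{23})+\gamma h(\bm q_{23})<0$ --- equivalently, robot 2's feasible set $[-\beta,\beta]$ empties in finite time --- so the minimum value of $L_f h(\bm q_{23})+\gamma h(\bm q_{23})$ is negative, as claimed.

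The main obstacle is the second step: pinning $u_2\equiv 0$ rigorously. One has to combine the mirror symmetry of the initial data with the structure of the QP (a feasible interval symmetric about the zero desired input) to conclude that robot 2 cannot move while its constraints are feasible, and then observe that the explicit affine formula of the third step forces that feasibility to break in finite time, which is precisely the inequality to be established. The remaining parts --- the one‑dimensional reductions and the final substitution --- are routine.
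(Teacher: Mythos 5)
Your route is genuinely different from the paper's. The paper never computes the trajectory: it observes that enforcing $\dot h+\gamma h\ge 0$ renders both $h\ge 0$ and the auxiliary function $h_2(\bm q_{ij})=\gamma T_c(\|\bm x_{ij}\|-r_s)+\frac{\dot x_{ij}x_{ij}}{\|x_{ij}\|}T_c\ge 0$ forward invariant, combines these into the state-space bound (34), and then minimizes $L_fh+\gamma h$ over all states consistent with that bound, obtaining $\max\bigl(\frac{\dot x_{23}x_{23}}{\|x_{23}\|},\gamma T_c\frac{\dot x_{23}x_{23}}{\|x_{23}\|}\bigr)<0$ whenever the robots are still approaching. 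That is a worst-case argument over the invariant set; yours is a constructive one showing the actual closed-loop trajectory reaches a state with $L_fh(\bm q_{23})+\gamma h(\bm q_{23})<0$ at an explicit time $t=1.9-T_c-1/\gamma$. Your conclusion is in that sense stronger and matches Fig.~2 more directly; the paper's argument is less tied to the particular trajectory. Your 1D reductions, the symmetry argument forcing $u_2\equiv 0$ while $\beta\ge 0$, and the affine formula $-5+\gamma(9.5-5T_c-5t)$ are all correct.

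The genuine gap is your assertion that robots 1 and 3 ``keep the constant velocities prescribed by the scenario'' up to $t=2$. Under the decentralized scheme (17)--(18) every robot runs its own QP, so robots 1 and 3 are also subject to constraints (with respect to robot 2 and to each other) and will brake once those become active; you cannot simply declare their velocities constant. The claim is repairable but must be checked: robot 3's constraint with respect to robot 2 reads $L_fh(\bm q_{32})-2L_gh(\bm q_{32})u_3+\gamma h(\bm q_{32})\ge 0$, and by (14)--(16) its slack at $u_3=0$ is exactly the same quantity $L_fh(\bm q_{23})+\gamma h(\bm q_{23})$ that bounds robot 2's interval $[-\beta,\beta]$; hence robot 3 coasts at $\hat u_3=0$ precisely until $t=1.9-T_c-1/\gamma$, and one also checks that the robot-1/robot-3 mutual constraint activates only later (at $t=1.9-T_c-1/(2\gamma)$ by the analogous computation with closing speed $10$). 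So the constant-velocity phase lasts exactly up to the critical time, at which point $L_fh+\gamma h=0$ with strictly negative time derivative (the dynamics are still unchanged there since $u_3=0$ remains feasible on the boundary), and the quantity then crosses into negative values. Without this verification your third step rests on an unjustified assumption about the other robots' closed-loop behavior; with it, your argument goes through and in fact pins down where Fig.~2's crossing occurs.
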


\begin{proof}[Proof of Theorem~{\upshape\ref{thm1}}]
 We can write $L_fh(\bm{q}_{23})+\gamma h(\bm{q}_{23})$ as in (29).
\begin{equation}
L_fh(\bm{q}_{23})+\gamma h(\bm{q}_{23})
=\frac{\dot{x}_{23}^{}x_{23}}{||x_{23}||}+\gamma(||x_{23}||+\frac{\dot{x}_{23}^{}x_{23}}{||x_{23}||}T_c-r_s )
\end{equation}
Using (3) and (8), we obtain (30). Here, we obtain the relationship between $\frac{\dot{x}_{ij}^{}x_{ij}}{||x_{ij}||}$ and $||x_{ij}||$ using the time derivative instead of the Lie derivative, which uses the inputs of the system. Additionally, substituting (1) into the acceleration term of the time derivative of $h(\bm{q}_{ij} )$ yields the same result as the Lie derivative.
\begin{equation}
\dot{h}(\bm{q}_{ij} )=\frac{\dot{x}_{ij}^{}x_{ij}}{||x_{ij}||}+\frac{d}{dt} \left(\frac{\dot{x}_{ij}^{}x_{ij}}{||x_{ij}||}\right)T_c
\end{equation}
From (3) and (30), we obtain $\dot{h} (\bm{q}_{ij} )+\gamma h(\bm{q}_{ij})$ in (31).
\begin{equation}
\begin{gathered}
\dot{h} (\bm{q}_{ij} )+\gamma h(\bm{q}_{ij}) \\
=\frac{\dot{x}_{ij}^{}x_{ij}}{||x_{ij}||}+\frac{d}{dt} \left(\frac{\dot{x}_{ij}^{}x_{ij}}{||x_{ij}||}\right)T_c+\gamma(||x_{ij}||+\frac{\dot{x}_{ij}^{}x_{ij}}{||x_{ij}||}T_c-r_s )\geq0
\end{gathered}
\end{equation}
Satisfying $\dot{h} (\bm{q}_{ij} )+\gamma h(\bm{q}_{ij})\geq0$ ensures that $h(\bm{q}_{ij})\geq0$.
$h(\bm{q}_{ij})\geq0$ and (8) gives the relation between $\frac{\dot{x}_{ij}^{}x_{ij}}{||x_{ij}||}$ and $||x_{ij}||$ as in (32).
\begin{equation}
\frac{\dot{x}_{ij}^{}x_{ij}}{||x_{ij}||}\geq -\frac{||x_{ij} ||-r_s}{T_c}
\end{equation}
In addition, by setting $h_2(\bm{q}_{ij})=\gamma T_c(||x_{ij}||-r_s)+\frac{\dot{x}_{ij}^{}x_{ij}}{||x_{ij}||}T_c$,
 (31) can be transformed into (33).
\begin{equation}
\begin{gathered}
\dot{h} (\bm{q}_{ij} )+\gamma h(\bm{q}_{ij}) \\
=\frac{\dot{x}_{ij}^{}x_{ij}}{||x_{ij}||}+\frac{d}{dt} \left(\frac{\dot{x}_{ij}^{}x_{ij}}{||x_{ij}||}\right)T_c
+\gamma(||x_{ij}||+\frac{\dot{x}_{ij}^{}x_{ij}}{||x_{ij}||}T_c-r_s )
\\=\frac{\dot{x}_{ij}^{}x_{ij}}{||x_{ij}||}\gamma T_c+\frac{d}{dt} \left(\frac{\dot{x}_{ij}^{}x_{ij}}{||x_{ij}||}\right)T_c
+\frac{1}{T_c} \left(\gamma T_c(||x_{ij}||-r_s)+\frac{\dot{x}_{ij}^{}x_{ij}}{||x_{ij}||}T_c\right)\\
=\dot{h}_2(\bm{q}_{ij})+\frac{1}{T_c}h_2(\bm{q}_{ij})\geq0
\end{gathered}
\end{equation}
Because (33) is the same as (3), $h_2(\bm{q}_{ij})\geq0$ is also guaranteed by satisfying $\dot{h} (\bm{q}_{ij} )+\gamma h(\bm{q}_{ij})\geq0$.
Therefore, because both $h(\bm{q}_{ij})\geq0$ and $h_2(\bm{q}_{ij})\geq0$ are always satisfied, the relation between $\frac{\dot{x}_{ij}^{}x_{ij}}{||x_{ij}||}$ and $||x_{ij}||$
is as in (34).
\begin{equation}
\frac{\dot{x}_{ij}x_{ij}}{||x_{ij}||}\geq \mathrm{max}\left(-\frac{||x_{ij}||-r_s}{T_c},-\gamma(||x_{ij}||-r_s)\right)
\end{equation}
Using (34), we find the minimum value of $L_fh(\bm{q}_{23})+\gamma h(\bm{q}_{23})$.
\begin{equation}
\begin{split}
\mathrm{min}\ L_fh(\bm{q}_{23})+\gamma h(\bm{q}_{23})
=\mathrm{max}\left(\frac{\dot{x}_{23}^{}x_{23}}{||x_{23}||},\frac{\dot{x}_{23}^{}x_{23}}{||x_{23}||}\gamma T_c\right)
\end{split}
\end{equation}
From the initial conditions and parameters, because we can confirm that $\dot{x}_{23}<0$,$x_{23}>0$ and $\gamma,T_c>0$, we prove that
$\frac{\dot{x}_{23}^{}x_{23}}{||x_{23}||}\gamma T_c,\frac{\dot{x}_{23}^{}x_{23}}{||x_{23}||}<0$.
Thus, we demonstrate that the minimum value of $L_fh(\bm{q}_{23})+\gamma h(\bm{q}_{23})$ is less than 0.
\end{proof}
This indicates that the existence of input $u_2$ is not guaranteed in this problem.
\subsection{Verification by 1D Simulation}
We verify the analytical results using simulations. We used “MatlabR2021a" from MathWorks as the simulation environment. The control period was set to $0.025$s and the simulation was performed for up to $320$ steps ($8$s). We used the “fmincon” function in MATLAB to compute the quadratic programs required for CBF processing. The parameters used in the simulations, which are based on \cite{c14},\cite{c19}, are listed in Table 1.
Fig.2 illustrates the possible range of the input for robot 2 $u_2$, in which the red and blue curves indicate the upper and lower limits of $u_2$, respectively. To determine the possible input, the red curve must always be superior to the blue curve. However, between $t=1.5$ and $3$s, the blue curve is higher than the red curve. This shows the inability to find an input that meets the conditions required to ensure collision avoidance.
We hypothesize that this problem can be resolved by modifying the decentralization procedure for CBF. In Section 4, we propose asymmetric decentralization to compensate for the lack of knowledge of the others robot's inputs.
\begin{table}[ht]
\caption{Parameters Specification}\label{<table-label1>}%
\begin{tabular}{@{}lll@{}}
\toprule
Parameter & Value & Name\\
\midrule
$k$ &$1$ & Position feedback coefficient\\
$c$ & $0.3$ & Damper coefficient\\
$r_s$ & 0.5 & Safe distance\\
$\gamma$ & 2 & Constraint parameters\\
$m$ & 1 & Mass\\
$T_c$ & $0.025$ & Time constant\\
\botrule
\end{tabular}
\end{table}
   \begin{figure}[ht]
      \centering
      \includegraphics[width=0.65\textwidth]{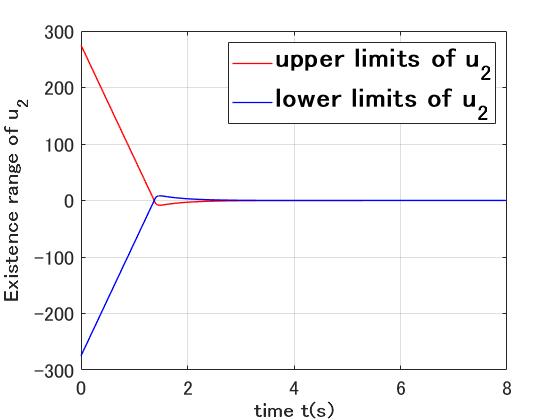}
      \caption{Existence range of the solution of $u_2$.}
   \end{figure}
\section{Asymmetric CBF to Ensure the Safety Condition}
\subsection{Asymmetric Design by Weight Functions}
We propose a method for avoiding the "no-solution" situation described in Section III by introducing asymmetric dynamic weight functions.
In this chapter, we analyze  whether the robot can move in a 2D plane.
Based on our proposed method, we introduce a weighted ZCBF\cite{c17} (36). The existence of solutions had not been discussed in previous research\cite{c17}. In the original weighted ZCBF, the constraint can be decentralized, as in inequality (37). If $w_{1 ij}+w_{1 ji}=2$ and $w_{2 ij}+w_{2 ji}=2$ holds, (36) becomes equivalent to (10). If we select $w_{1 ij}=w_{1 ji}=w_{2 ij}=w_{2 ji}=1$, (37) becomes (17).
\begin{equation}
 \frac{(w_{1 ij}+w_{1 ji})}{2}L_f h(\bm{q}_{ij}) + L_g h(\bm{q}_{ij})(\bm{u}_j-\bm{u}_i) + \frac{(w_{2 ij}+w_{2 ji})}{2}\gamma h(\bm{q}_{ij})  \geq 0
\end{equation}
\begin{equation}
\begin{gathered}
w_{1 ij}L_f h(\bm{q}_{ij}) - 2L_g h(\bm{q}_{ij})\bm{u}_i + w_{2 ij}\gamma h(\bm{q}_{ij}) \geq 0\\
w_{1 ji}L_f h(\bm{q}_{ji}) - 2L_g h(\bm{q}_{ji})\bm{u}_j + w_{2 ji}\gamma h(\bm{q}_{ji}) \geq 0\\
\end{gathered}
\end{equation}
In this study, we introduce a novel design of weight functions for $w_{1 ij}$, $w_{1 ji}$, $w_{2 ij}$ and $w_{2 ji}$ in (36) and analytically show that the weight functions can guarantee satisfying (10).
Equation (38) and (39) show the proposed weight function using the absolute velocity of the robot. As shown in (40) and (41), the weight functions satisfy $w_{1 ij}+w_{1 ji}=2$ and $w_{2 ij}+w_{2 ji}=2$. When $\dot{x}_i \neq \dot{x}_j$, $w_{1 ij}$, $w_{1 ji}$ and $w_{2 ij}$, $w_{2 ji}$ will be different. Subsequently, the CBF constraint will be asymmetrically decentralized. By substituting (38) and (39) into (37), we obtain (42) for the robot$_i$ to guarantee collision avoidance:
\begin{equation}
w_{1 ij}=\frac{-2\frac{\dot{\bm{x}}_{i}^{\top}\bm{x}_{ij}}{||\bm{x}_{ij}||}+\frac{(y_{ij}\dot{x}_{ij}-x_{ij}\dot{y}_{ij})^2}{||\bm{x}_{ij}||^3}T_c}{L_f h(\bm{q}_{ij})}\\
\end{equation}
\begin{equation}
w_{2 ij}=\frac{||\bm{x}_{ij}||-2\frac{\dot{\bm{x}}_{i}^{\top}\bm{x}_{ij}}{||\bm{x}_{ij}||}T_c-r_s}{h(\bm{q}_{ij})}\\
\end{equation}
\begin{equation}
\begin{gathered}
w_{1 ij}+w_{1 ji}=\frac{-2\frac{\dot{\bm{x}}_{i}^{\top}\bm{x}_{ij}}{||\bm{x}_{ij}||}+\frac{(y_{ij}\dot{x}_{ij}-x_{ij}\dot{y}_{ij})^2}{||\bm{x}_{ij}||^3}T_c}{L_f h(\bm{q}_{ij})}+\frac{-2\frac{\dot{\bm{x}}_{j}^{\top}\bm{x}_{ji}}{||\bm{x}_{ji}||}+\frac{(y_{ji}\dot{x}_{ji}-x_{ji}\dot{y}_{ji})^2}{||\bm{x}_{ji}||^3}T_c}{L_f h(\bm{q}_{ji})}\\
=2\frac{\frac{\dot{\bm{x}}_{ij}^{\top}\bm{x}_{ij}}{||\bm{x}_{ij}||}+\frac{(y_{ij}\dot{x}_{ij}-x_{ij}\dot{y}_{ij})^2}{||\bm{x}_{ij}||^3}T_c}{L_f h(\bm{q}_{ij})}
=2\frac{L_f h(\bm{q}_{ij})}{L_f h(\bm{q}_{ij})}=2
\end{gathered}
\end{equation}
\begin{equation}
\begin{gathered}
w_{2 ij}+w_{2 ji}=\frac{||\bm{x}_{ij}||-2\frac{\dot{\bm{x}}_{i}^{\top}\bm{x}_{ij}}{||\bm{x}_{ij}||}T_c-r_s}{h(\bm{q}_{ij})}+\frac{||\bm{x}_{ji}||-2\frac{\dot{\bm{x}}_{j}^{\top}\bm{x}_{ji}}{||\bm{x}_{ji}||}T_c-r_s}{h(\bm{q}_{ji})}\\
=2\frac{||\bm{x}_{ij}||+\frac{\dot{\bm{x}}_{ij}^{\top}\bm{x}_{ij}}{||\bm{x}_{ij}||}T_c-r_s}{h(\bm{q}_{ij})}=2\frac{h(\bm{q}_{ij})}{h(\bm{q}_{ij})}=2
\end{gathered}
\end{equation}
\begin{equation}
-2\frac{\dot{\bm{x}}_{i}^{\top}\bm{x}_{ij}}{||\bm{x}_{ij}||}+\frac{(y_{ij}\dot{x}_{ij}-x_{ij}\dot{y}_{ij})^2}{||\bm{x}_{ij}||^3}T_c - 2L_g h(\bm{q}_{ij})\bm{u}_i + \gamma(||\bm{x}_{ij}||-2\frac{\dot{\bm{x}}_{i}^{\top}\bm{x}_{ij}}{||\bm{x}_{ij}||}T_c-r_s) \geq 0\\
\end{equation}\\
\newpage
By introducing these weight functions, we can guarantee a common solution to all the simultaneous inequalities. We prove this in Theorem 2.\\
\begin{theorem}[]\label{thm2}
 A control input $\bm{u}_i=-m(\gamma+\frac{1}{T_c}) \dot{\bm{x}}_i$, always satisfies inequality (42).
\end{theorem}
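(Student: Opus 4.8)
The plan is to verify (42) by direct substitution of the candidate input followed by algebraic cancellation, showing the left-hand side collapses to a manifestly nonnegative quantity. I would deliberately \emph{not} reason about the individual weighted inequalities in (37), nor about whether the weight functions (38)--(39) are well defined (their denominators $L_f h(\bm{q}_{ij})$ and $h(\bm{q}_{ij})$ could vanish): since (42) is already the single combined constraint that $\bm{u}_i$ must satisfy for robot$_i$, it suffices to plug the proposed $\bm{u}_i$ in and check the inequality as stated.

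First I would use $L_g h(\bm{q}_{ij}) = \dfrac{\bm{x}_{ij}^{\top}}{m\,||\bm{x}_{ij}||}T_c$ from (12) to evaluate the input term: with $\bm{u}_i=-m\bigl(\gamma+\tfrac{1}{T_c}\bigr)\dot{\bm{x}}_i$,
\[
-2L_g h(\bm{q}_{ij})\bm{u}_i = 2\bigl(\gamma T_c+1\bigr)\frac{\dot{\bm{x}}_i^{\top}\bm{x}_{ij}}{||\bm{x}_{ij}||}.
\]
Substituting this into (42), the three terms carrying the factor $\dfrac{\dot{\bm{x}}_i^{\top}\bm{x}_{ij}}{||\bm{x}_{ij}||}$ --- the coefficient $-2$ coming from the weighted $L_f h$ part, the coefficient $-2\gamma T_c$ coming from the weighted $\gamma h$ part, and the coefficient $+2(\gamma T_c+1)$ coming from the input part --- sum to zero and cancel exactly. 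The left-hand side of (42) therefore reduces to
\[
\frac{(y_{ij}\dot{x}_{ij}-x_{ij}\dot{y}_{ij})^2}{||\bm{x}_{ij}||^{3}}T_c + \gamma\bigl(||\bm{x}_{ij}||-r_s\bigr).
\]

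It then remains to argue nonnegativity. The first term is a perfect square divided by $||\bm{x}_{ij}||^{3}>0$ and multiplied by $T_c>0$, hence $\ge 0$; the second term is $\ge 0$ because $\gamma>0$ and, as long as no collision has occurred, $||\bm{x}_{ij}||\ge r_s$, which is precisely the condition $h_0(\bm{q}_{ij})\ge 0$ that the ZCBF is designed to render forward invariant. This yields (42) for the stated $\bm{u}_i$. The only point that needs care --- and the nearest thing to an obstacle --- is this appeal to $||\bm{x}_{ij}||\ge r_s$, i.e. that the state remains in the safety set; everything else is routine algebra. As an immediate corollary, the QP (18) with the asymmetric weights always admits a feasible point, which is exactly what removes the no-solution scenario established in Theorem~\ref{thm1}.
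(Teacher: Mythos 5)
Your proposal is correct and follows essentially the same route as the paper's own proof: substitute $\bm{u}_i=-m(\gamma+\tfrac{1}{T_c})\dot{\bm{x}}_i$ into (42), observe that all terms proportional to $\dot{\bm{x}}_i^{\top}\bm{x}_{ij}/||\bm{x}_{ij}||$ cancel, and conclude from the residual $\frac{(y_{ij}\dot{x}_{ij}-x_{ij}\dot{y}_{ij})^2}{||\bm{x}_{ij}||^{3}}T_c+\gamma(||\bm{x}_{ij}||-r_s)\ge 0$ using $||\bm{x}_{ij}||\ge r_s$. You are in fact slightly more explicit than the paper about the one delicate point (that $||\bm{x}_{ij}||\ge r_s$ rests on the state remaining in the safety set), and the paper additionally remarks that since the input is independent of $\bm{x}_{ij}$, $\dot{\bm{x}}_{ij}$ and $r_s$ it is a \emph{common} feasible point for all of robot$_i$'s coupled constraints --- the observation underlying your closing corollary.
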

\begin{proof}[Proof of Theorem~{\upshape\ref{thm2}}]
By substituting $\bm{u}_i=-m(\gamma+\frac{1}{T_c}) \dot{\bm{x}}_i$ into (42), (43) holds true. 
\begin{equation}
\begin{gathered}
-2\frac{\dot{\bm{x}}_{i}^{\top}\bm{x}_{ij}}{||\bm{x}_{ij}||}+\frac{(y_{ij}\dot{x}_{ij}-x_{ij}\dot{y}_{ij})^2}{||\bm{x}_{ij}||^3}T_c - 2L_g h(\bm{q}_{ij})\bm{u}_i + \gamma(||\bm{x}_{ij}||-2\frac{\dot{\bm{x}}_{i}^{\top}\bm{x}_{ij}}{||\bm{x}_{ij}||}T_c-r_s)\\
=-2\frac{\dot{\bm{x}}_{i}^{\top}\bm{x}_{ij}}{||\bm{x}_{ij}||}+\frac{(y_{ij}\dot{x}_{ij}-x_{ij}\dot{y}_{ij})^2}{||\bm{x}_{ij}||^3}T_c - 2\frac{\bm{x}_{ij}^\top}{m||\bm{x}_{ij}||}T_c(-m(\gamma+\frac{1}{T_c})\dot{\bm{x}}_i)\\+\gamma(||\bm{x}_{ij}||-2\frac{\dot{\bm{x}}_{i}^{\top}\bm{x}_{ij}}{||\bm{x}_{ij}||}T_c-r_s)\\
=\frac{(y_{ij}\dot{x}_{ij}-x_{ij}\dot{y}_{ij})^2}{||\bm{x}_{ij}||^3}T_c+\gamma(||\bm{x}_{ij}||-r_s)
\end{gathered}
\end{equation}
From Equation (7), since $||\bm{x}_{ij}||-r_s\geq0$, we have $\frac{(y_{ij}\dot{x}_{ij}-x_{ij}\dot{y}_{ij})^2}{||\bm{x}_{ij}||^3}T_c+\gamma(||\bm{x}_{ij}||-r_s)\geq0$.
This implies that the input always satisfies the condition guaranteeing collision avoidance. 
This input is the same as in (1), where $c = m(\gamma+\frac{1}{T_c})$ and $k = 0$.
Because the input $\bm{u}_i=-m(\gamma+\frac{1}{T_c}) \dot{\bm{x}}_i$ does not include any values of the relative distance $\bm{x}_{ij}$, relative velocity $\bm{\dot{x}}_{ij}$, or safe distance $r_s$, it is a common solution for all coupled constraints of the robot${_i}$.
\end{proof}
Thus, the existence of a solution can be guaranteed by introducing asymmetric weight functions.
Because the proposed method does not change the original constraints of CBF but only modifies the decentralization method, it is valid for safety control, as demonstrated in previous CBF studies [12-19].
\subsection{Verification by 1D Simulation}
The proposed method was applied in the simulation of the scenario shown in Fig. 1. All parameters were the same as those described in Section 3. Fig. 3 shows the upper (red) and lower (blue) limits of a possible solution for robot 2. In contrast to the results shown in Fig. 2, there were no conflicts between the upper and lower limits. The safety conditions were maintained during the simulation.
\clearpage
\begin{figure}[ht]
   \centering
   \includegraphics[width=0.65\textwidth]{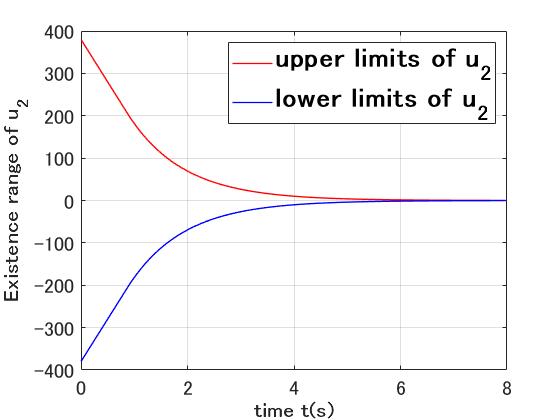}
   \caption{Existence range of the solution of $u_2$.}
\end{figure}

\section{Applicability to 2D Search Problems}
We applied the proposed method to a 2D field. We performed simulations for verification by considering collision avoidance.
We set the simulation conditions as shown in Fig. 4. In the initial state, we placed 20 robots in a circular formation as indicated by the red dots in Fig. 4.
All robots gathered toward the green dot in the center of Fig. 4.
In such a situation, where many robots congregate at one location, collision avoidance becomes necessary.
We performed simulations under three conditions: symmetric CBF, Asymmetric CBF with the weight function proposed in the previous research\cite{c17} as shown in equation (44) and (45), and Asymmetric CBF with the weight function of the proposed method.
\begin{equation}
w_{1 ij}=1
\end{equation}\\
\begin{equation}
w_{2 ij}=\frac{||\dot{\bm{x}}_{j}||}{||\dot{\bm{x}}_{i}||+||\dot{\bm{x}}_{j}||}
\end{equation}\\
The simulation parameters are listed in Table 2. We performed simulations by changing the CBF parameter $\gamma$, which could change the behavior of the robots, as discussed in Section 2. We performed the simulations 100 times for each value of $\gamma$. The initial positions and velocities of the robots exhibited microrandomness in each simulation.
\begin{figure}[ht]
   \centering
   \includegraphics[width=0.8\textwidth]{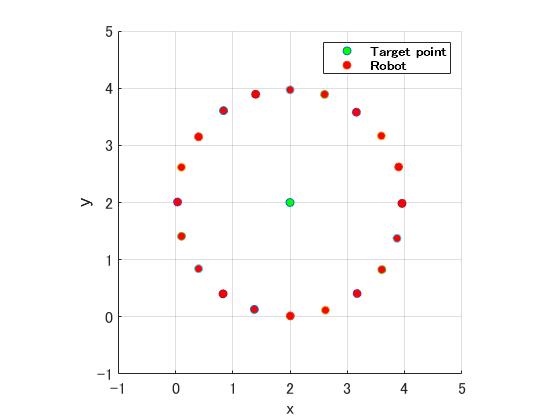}
   \caption{initial state of simulation}
\end{figure}
\begin{table}[ht]
\caption{Parameters Specification}\label{<table-label3>}%
\begin{tabular}{@{}lll@{}}
\toprule
Parameter & Value & Name\\
\midrule
Parameter & Value & Name\\
$k$ &$1$ & Position feedback coefficient\\
$c$ & $0.3$ & Damper coefficient\\
$r_s$ & $0.08$ & Safe distance\\
$m$ & 1 & Mass\\
$T_c$ & $0.025$ & Time constant\\
\botrule
\end{tabular}
\end{table}
\newpage
We examined whether a no-solution state occurred or not, with and without the application of the proposed method. We declared that there was no solution if the “fmincon” function of MATLAB, used for solving the QP, failed to find a solution. We set a constant tolerance of $10^{-6}$.
The simulation was performed for up to 400 steps (10s). We plotted the minimum constraint inequality (42) for each step in Fig. 5, 6 and 7 using symmetric, asymmetric weight functions in previous research and asymmetric weight proposed in this study. In the conventional method (symmetric and asymmetric weight functions in previous research), the minimum value was less than 0.
However, in the proposed method (asymmetric), the minimum value was greater than zero in all steps. Table 3 lists the number of no-solution situations in the 100 simulations for each condition. With the proposed method, the robots successfully maintained their safety conditions in all simulations; however, they could not achieve this, in any case, without applying the proposed method. The results demonstrate that the proposed method ensures collision avoidance in 2D searching tasks.
\begin{figure}[ht]
 \begin{minipage}[b]{0.5\linewidth}
 \centering
 \includegraphics[width=0.9\textwidth]{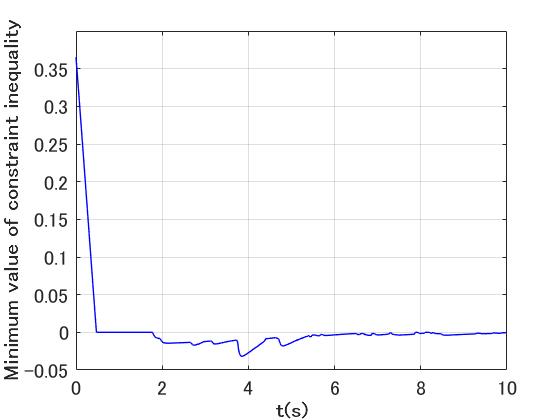}
 \caption{Minimum value of constraint inequality (symmetry)}
 \end{minipage}
 \begin{minipage}[b]{0.5\linewidth}
 \centering
 \includegraphics[width=0.9\textwidth]{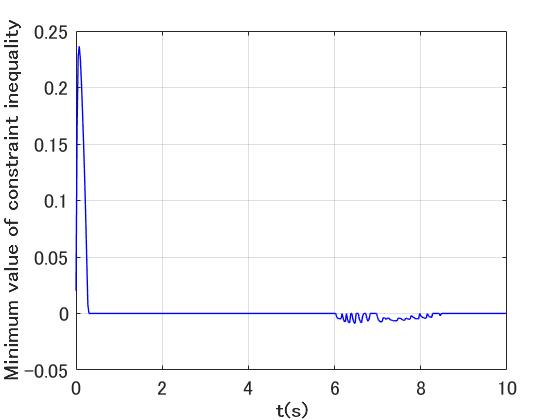}
 \caption{Minimum value of constraint inequality (asymmetry,previous research[14])}
 \end{minipage}
 \end{figure}
 \begin{figure}[ht]
 \centering
 \includegraphics[width=0.65\textwidth]{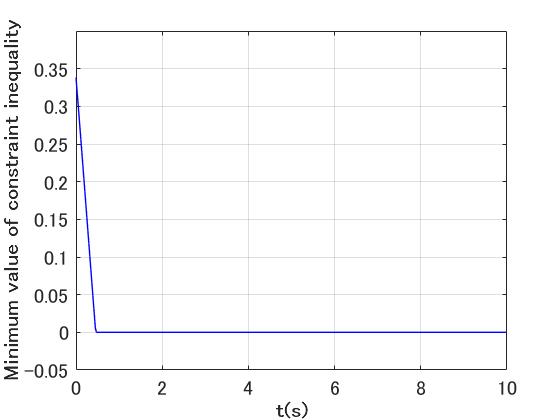}
 \caption{Minimum value of constraint inequality (asymmetry,proposed)}
 \end{figure}
 \begin{table}[ht]
 \caption{NUMBER OF NO-SOLUTION SITUATIONS IN 100 SIMULATIONS}\label{<table-label2>}%
 \begin{tabular}{@{}llll@{}}
 \toprule
$\gamma$ & Asymmetric CBF (Proposed) & Asymmetric CBF (Previous research) & Symmetric CBF\\
 \midrule
 $0.5$ & 0 & 100 & 74\\
 $1.0$ & 0 & 100 & 85\\
 $5.0$ & 0 & 100 & 100\\
 \botrule
 \end{tabular}
 \end{table}
 \newpage
\section{Experiments with real robots}
To verify the effectiveness of the proposed method in a real environment, we conduct an experiment with the real mobile robot shown in Fig. 8. 
The robot is fully distributed, equipped with a 2D LiDAR, which as 240 degree field of view angle. It has a reflector and four omni-wheels with motors.
The robot can obtain the relative position, relative velocity, and absolute velocity data necessary to constrain of CBF from sensor data and the encoders of the motors. 
 The robot updates and follows the target speed using the  acceleration input designed using the inequality constraint equation (42) at a control period of 10 Hz. 
 In the experiment, we set each parameter of equation (42) as follows: safety distance $r_s=400$mm, $T_c=0.1$, $\gamma=0.3$.
    \begin{figure}[ht]
 \centering \includegraphics[width=0.65\textwidth]{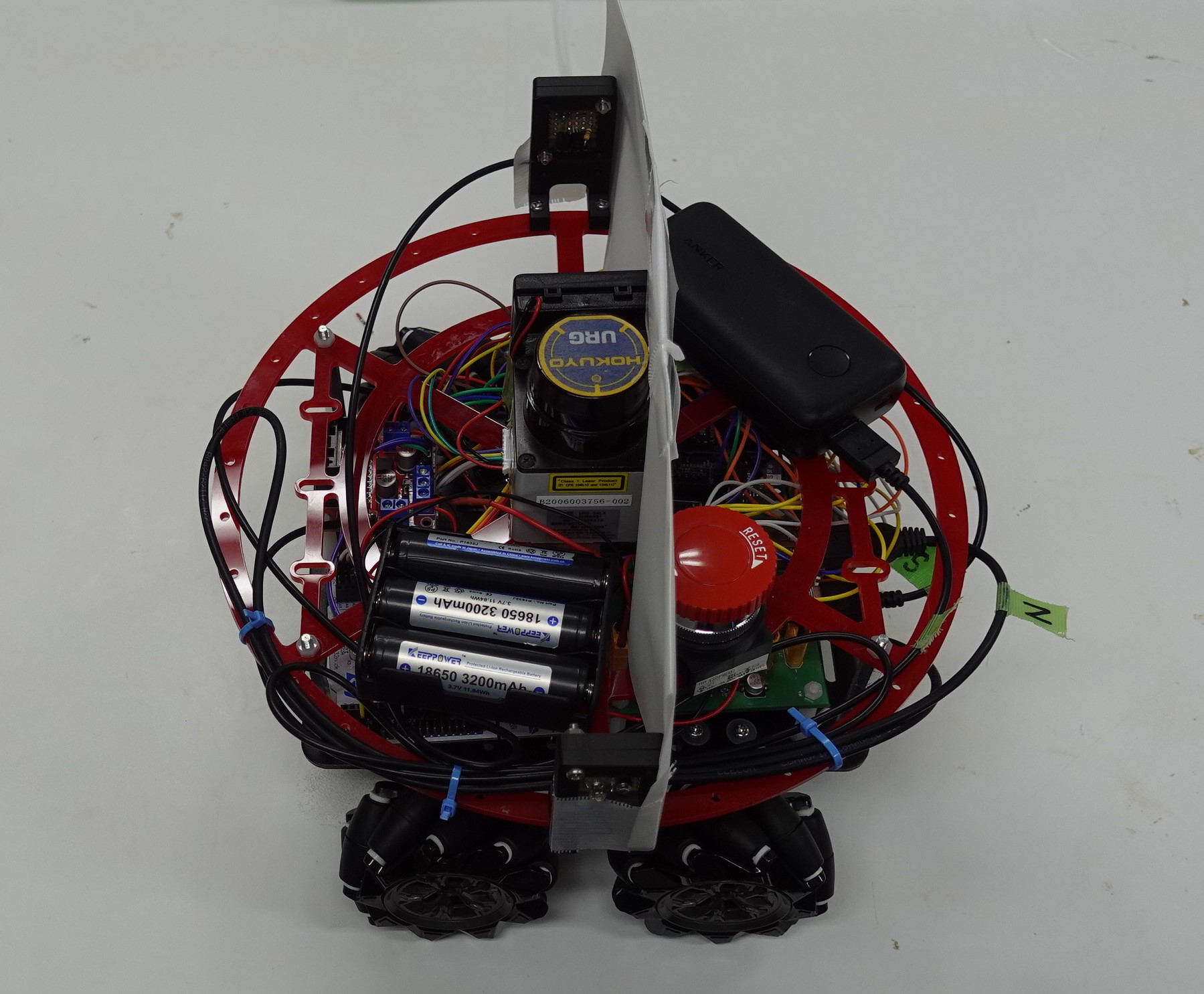}
 \caption{Robots used for experiments}
 \end{figure}\\
We set a similar condition to Fig 1. Fig. 9 shows the experimental environment with three robots. To simplify the experimental condition, we put guide rails, by which a robot just moves forward and backward. The left and right robot are going to approach the position of the center robot. For the left and right robots, we set the initial robot velocity $v_0=300$mm/s and the acceleration input $\hat{u}=50$mm/$s^2$ without considering collision avoidance for each step. For the center robot, we set $v_0=0$mm/s, $\hat{u}=0$mm/$s^2$.
  \begin{figure}[ht]
 \centering
 \includegraphics[width=0.65\textwidth]{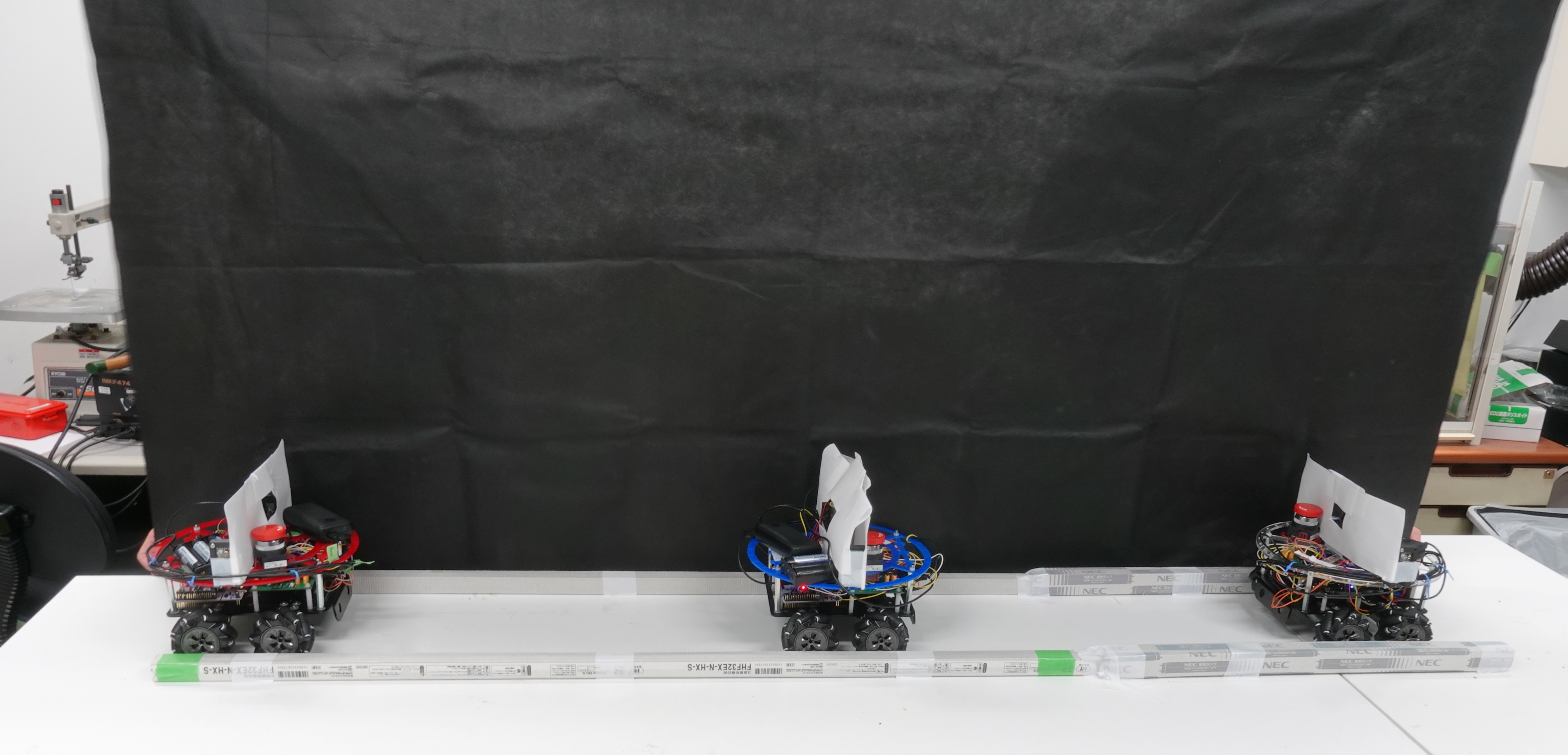}
 \caption{Experimental environment}
 \end{figure}
 \clearpage
 Fig. 10 and 11 show the upper and lower limit of possible input for the center robot considering CBF constraints. Fig 10 indicates the results by a symmetric weight function, where the lower limit (blue curve) became higher than the upper limit (orange). In contrast, the upper limit was higher or almost the same as the lower in Fig. 11. Due to sensing variance of the LiDAR, we found some crosses of the blue and orange curves. We consider the difference between Fig. 10 and 11 are significant and the proposed algorithm surely supressed no-solution situations.
 \begin{figure}[ht]
 \begin{minipage}[b]{0.5\linewidth}
 \centering
 \includegraphics[width=0.9\textwidth]{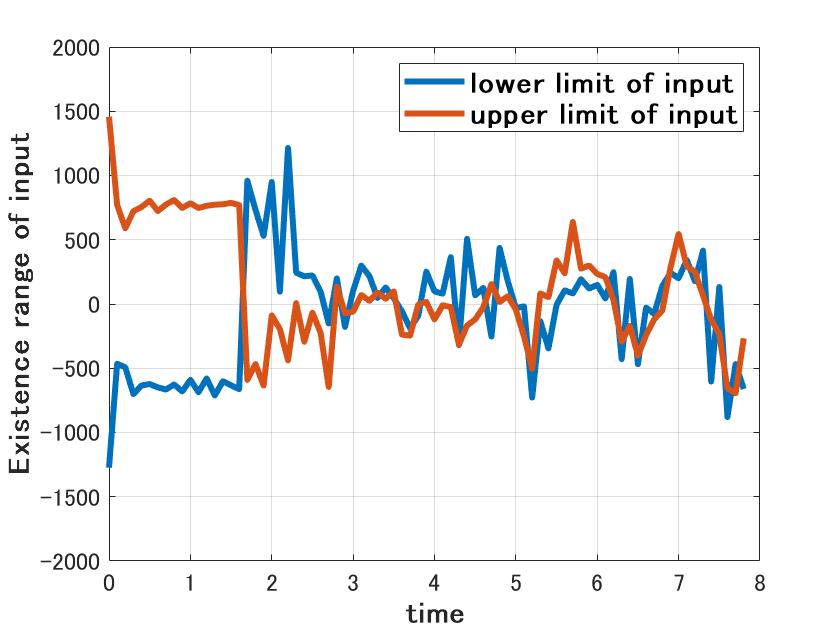}
 \caption{Existence range of input(symmetry)}
 \end{minipage}
 \begin{minipage}[b]{0.5\linewidth}
 \centering
 \includegraphics[width=0.9\textwidth]{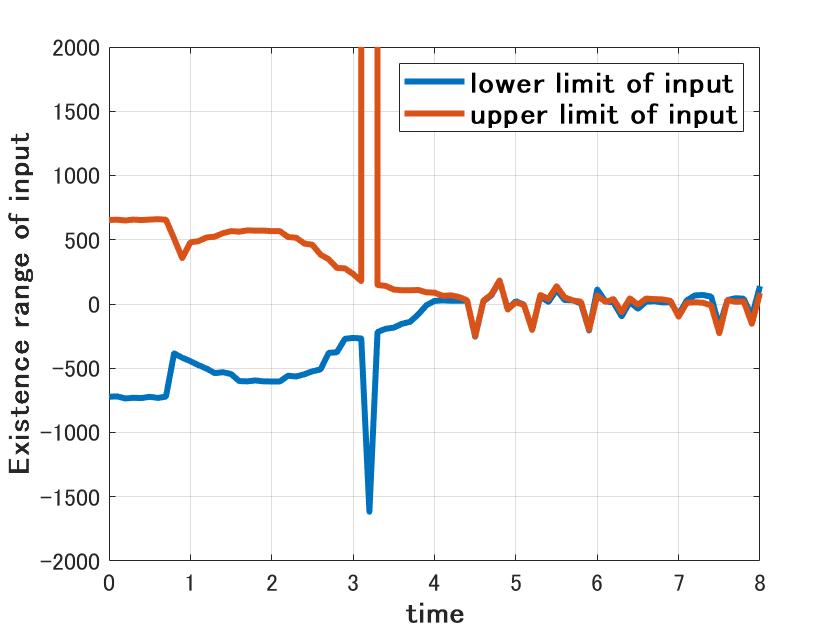}
 \caption{Existence range of input(asymmetry)}
 \end{minipage}
 \end{figure}
 
 We show the closest approach of the robots in the experiment with symmetric weight functions in Fig. 12. We confirmed the robots clearly violated the safety distance as a result of constraint violations. On the other hand, We show the closest approach of the robots in the experiment with the proposed asymmetric weight function in Fig. 13 and confirmed the robots was able to stop at a safe distance.
 From these results, we confirm the effectiveness of the proposed method in a real-world environment.
\begin{figure}[ht]
 \begin{minipage}[b]{0.5\linewidth}
 \centering
 \includegraphics[width=0.9\textwidth]{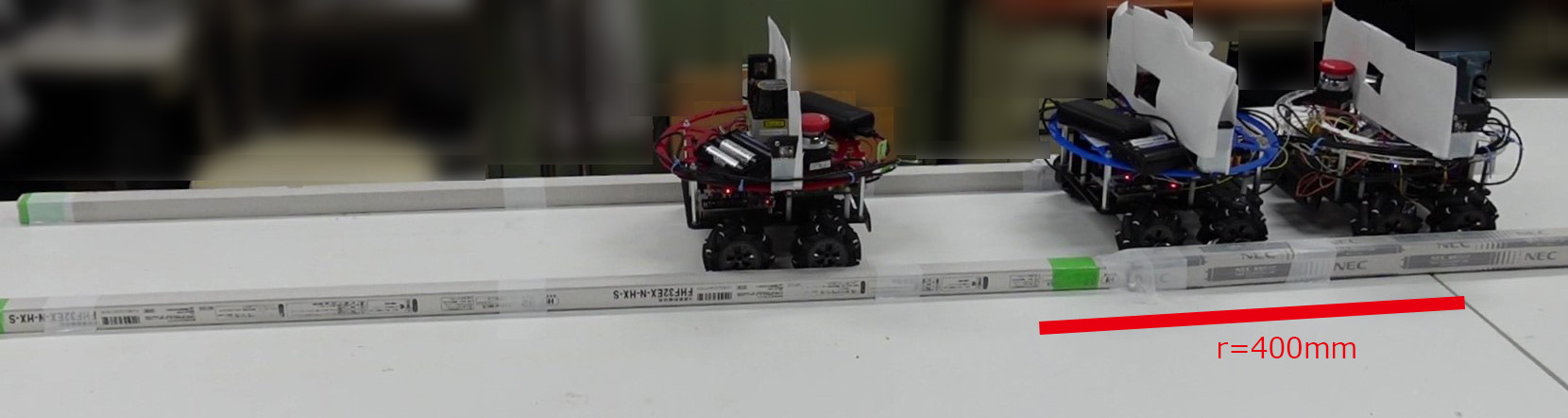}
 \caption{Robot at closest approach during experiment(symmetry)}
 \end{minipage}
 \begin{minipage}[b]{0.5\linewidth}
 \centering
 \includegraphics[width=0.9\textwidth]{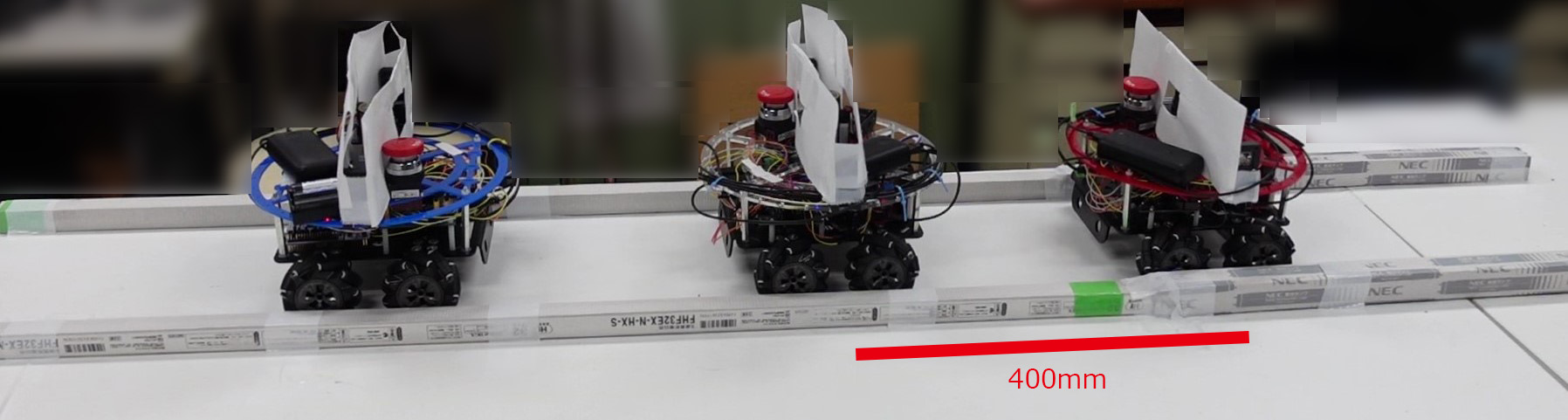}
 \caption{Robot at closest approach during experiment(asymmetry)}
 \end{minipage}
 \end{figure}
\section{Conclusions}
In this study, we proposed a collision avoidance technique using decentralized CBF for a swarm robotic system composed of robots that obeyed the equations of motion of a second-order differential system. We analyzed the problem of no solution in a conventional decentralized CBF, which revealed the need to revise the method to distribute constraints among robots.We then proposed an asymmetric method to decentralize the CBF conditions so that robots could avoid no-solution situations. We then validated the proposed method in 1D and 2D planes and real robots.
Our future plans include to work on robot implementation in two dimensions.
\section*{Acknowledgments}
This study was partially supported by Jana Society for the Promotion of Science "KAKENHI" grant-in-aid No. JP21K18967 and JP22H01440, and the Sensing Solution University Collaboration Program of the Sony Semiconductor Solutions Corporation.We thank Ms. Kanon Yokoi for assistance of experimental set up.
\begin{itemize}
\item Authors' contributions
H.E conceieved and designed the control method and simulations, and wrote the paper. Y.O and D.K analyzed and evaluated the proof and the results, and contributed to manuscript revision.
\item Funding
This study was partially supported by Jana Society for the Promotion of Science "KAKENHI" grant-in-aid No. JP21K18967 and JP22H01440, and the Sensing Solution University Collaboration Program of the Sony Semiconductor Solutions Corporation.
\end{itemize}
\section*{Declarations}
\begin{itemize}
\item Conflict of interest/Competing interests 
The authors declare no conflict of interest.
\item Ethics approval
The authors declare the manuscript doesn't include any concerns on ethical issues.
\item Consent to participate
This paper does not involve human participants and animals.
\item Consent for publication
H.E, Y.O and D.K declare they have agreed to submit and publsih the manuscript.
\item Availability of data and materials
The authors declare they will open the codes and simulation results on github after acceptance of the paper.
\item Code availability
The authors declare they will open the codes and simulation results on github after acceptance of the paper.
\end{itemize}


\begin{thebibliography}{99}

\bibitem{c1}
Rongxin, C., \textit{et al.}: Leader–follower formation control of under actuated autonomous underwater vehicles. Ocean Eng., vol. 37, pp. 1491–1502(2010) 10.1016/j.oceaneng.2010.07.006.
\bibitem{c2}
Tatsuya, M.,\textit{ et al.}: Distributed force/position optimization dynamics for cooperative unknown payload manipulation. in Proc. 2020 59th IEEE Conf. Decis. Control (CDC), pp. 5366–5373, Korea(2020) , 10.1109/CDC42340.2020.9304246.
\bibitem{c3}
Naohiko, S.,\textit{ et al.}: Collective motion in a system of motile elements. Phys. Rev. Lett., vol. 76, pp. 3870–3873(1996) 10.1103/PhysRevLett.76.3870.
\bibitem{c4}
Tomohisa, H.,\textit{ et al.}: Formation control of multi-agent systems with sampled information–Relationship between information exchange structure and control performance–. in Proc. 45th IEEE Conf.	Decis. Control (CDC), pp. 4333–4338, USA(2006) 10.1109/CDC.2006.377708.
\bibitem{c5}David, A., Pierre, A.B.: Stability of leaderless discrete-time multi-agent systems. Math. Control Signals Syst., vol. 18, pp. 293–322(2006) 10.1007/s00498-006-0006-0.
\bibitem{c6}
Gajamohan, M., Tomohisa, H.: Influence of stochastic communication loss on the stability of a formation of multiple agents. in Proc. 2007 Am. Control Conf., pp. 341–346, USA(2007) 10.1109/ACC.2007.4282877.
\bibitem{c7}
Dimos, V.D.,Kostas, J.K.: On the rendezvous problem for multiple nonholonomic agents. IEEE Trans. Autom. Control, vol. 52,pp. 916–922(2007) 10.1109/TAC.2007.895897.
\bibitem{c8}
Luciano, C.A.P.,\textit{ et al.}: Swarm coordination based on smoothed particle hydrodynamics technique.  IEEE Trans. Robot., vol. 29, pp. 383–399(2013) 10.1109/TRO.2012.2234294.
\bibitem{c9}
Daito, S.,\textit{ et al.}: Leader–follower navigation in obstacle environments while preserving connectivity without data transmission. IEEE Trans. Control Syst. Technol., vol. 26, pp. 1233–1248(2018) 10.1109/TCST.2017.2705121.
\bibitem{c10}
Kazuya, S.,\textit{ et al.}: Self-assembly through the local Interaction between “embodied” nonlinear oscillators with simple motile function. in Proc. 2008 IEEE/RSJ Int. Conf. Intell. Robots Syst., Nice, France, pp. 1319–1324(2008) 10.1109/IROS.2008.4650804.
\bibitem{c11}
Kwang, K.O.,\textit{ et al.}: A survey of multi-agent formation control. Automatica, vol. 53, pp. 424–440(2015) https://doi.org/10.1016/j.automatica.2014.10.022.
\bibitem{c12}
Aaron, D.A.,\textit{ et al.}: Control barrier function based quadratic programs with application to adaptive cruise control. In Proc. 53rd IEEE Conf. Decis. Control, pp. 6271–6278, USA(2014) 10.1109/CDC.2014.7040372.
\bibitem{c13}
Aaron, D.A.,\textit{ et al.}: Control barrier function based quadratic programs for safety critical systems. IEEE Trans. Autom. Control, vol. 62, pp. 3861–3876(2017) 10.1109/TAC.2016.2638961.
\bibitem{c14}
Li, W.,\textit{ et al.}: Safety Barrier Certificates for Collisions-Free Multirobot Systems. IEEE Trans. Robot., vol. 33, pp. 661–674(2017) 10.1109/TRO.2017.2659727.
\bibitem{c15}
Aaron, D.A.,\textit{ et al.}: Control barrier functions: Theory and applications. In Proc. 2019 18th Eur. Control. Conf. (ECC), pp. 3420–3431, Italy(2019) 10.23919/ECC.2019.8796030.
\bibitem{c16}
Urs, B.,\textit{ et al.}: Control barrier certificates for safe swarm behavior. IFAC-PapersOnLine, vol. 48, pp. 68–73(2015) 10.1016/j.ifacol.2015.11.154.
\bibitem{c17}
Mahato, E.,\textit{ et al.}: Collision-free formation control for quadrotor networks based on distributed quadratic programs. in Proc. 2019 Am. Control Conf. (ACC), pp. 3335–3340, USA(2019) 10.23919/ACC.2019.8814603.
\bibitem{c18}
Tatsuya, I.,\textit{ et al.}: Optimization-based distributed flocking control for multiple rigid bodies. IEEE Robot. Autom. Lett., vol. 5, pp. 1891–1898(2020) 10.1109/LRA.2020.2969950.
\bibitem{c19}
Yuki, O.,\textit{ et al.}: Control input design for a robot swarm maintaining safety distances in crowded environment. Symmetry, vol. 13, pp. 478(2021) 10.3390/sym13030478.
\bibitem{c20}
Chuan, S.,\textit{ et al.}: An active safety control method of collision avoidance for intelligent connected vehicle based on driving risk perception. J. Intell. Manuf., vol. 32, pp. 1249-1269(2021) 10.1007/s10845-020-01605-x
\bibitem{c21}
Quan N., Koushil S.: Exponential control barrier functions for enforcing high relative-degree safety-critical constraints. in Proc. 2016 Am. Control Conf., pp. 322–328, USA(2016) 10.1109/ACC.2016.7524935.
\end{thebibliography}

\end{document}